\def\eqref#1{equation~\ref{#1}}
\def\1{\bm{1}}
\DeclareMathAlphabet{\mathsfit}{\encodingdefault}{\sfdefault}{m}{sl}
\SetMathAlphabet{\mathsfit}{bold}{\encodingdefault}{\sfdefault}{bx}{n}
\newtheorem{theorem}{Theorem}[section]
\newtheorem{corollary}[theorem]{Corollary}
\newtheorem{proposition}[theorem]{Proposition}
\theoremstyle{remark}
\newtheorem{remark}[theorem]{Remark}
\theoremstyle{definition}
\newtheorem{definition}[theorem]{Definition}
\title{The Space Between: On Folding, Symmetries and Sampling}
\author{Michal Lewandowski$^\dagger$, Bernhard Heinzl$^\dagger$, Raphael Pisoni$^\dagger$, Bernhard A.Moser$^{\dagger*}$ \\
$^\dagger$Software Competence Center Hagenberg (SCCH)\\
$^*$Johannes Kepler University of Linz (JKU)\\
\texttt{\{name.surname\}@scch.at} \\
}
\begin{document}

\maketitle

\begin{abstract}
Recent findings suggest that consecutive layers of neural networks with the ReLU activation function \emph{fold} the input space during the learning process. While many works hint at this phenomenon, an approach to quantify the folding was only recently proposed by means of a space folding measure based on Hamming distance in the ReLU activation space. We generalize this measure to a wider class of activation functions through introduction of equivalence classes of input data, analyse its mathematical and computational properties and come up with an efficient sampling strategy for its implementation. Moreover, it has been observed that space folding values increase with network depth when the generalization error is low, but decrease when the error increases. This underpins that learned symmetries in the data manifold  (e.g., invariance under reflection) become visible in terms of space folds, contributing to the network's generalization capacity. Inspired by these findings, we outline a novel regularization scheme that encourages the network to seek solutions characterized by higher folding values. 
\end{abstract}

\section{Introduction}
\label{sec:intro}
Recent works in machine learning indicate that neural networks \emph{fold} the input space during training~\citep{montufar2014number,keup2022origami}. This phenomenon draws inspiration from the way certain natural structures—such as proteins and amino acids—fold to encode information efficiently~\citep{dill2008protein,jumper2021alphafold}. Building on these ideas, \citet{lewandowski25spacefolds} proposed a range-based measure in the discrete activation space of ReLU networks to quantify how \emph{much} a network folds its input space as it learns. Their analysis focuses on deviations from convexity when mapping a straight-line path in the input space to the Hamming activation space: While Euclidean distances increase monotonically along the path, the corresponding Hamming distances may decrease, signaling a folding effect (cf. Fig.~\ref{fig:straight_path_walks} right and Fig.~\ref{fig:convex_dev}).
Originally developed for ReLU networks, this approach leverages the fact that the ReLU activation function partitions the input space into disjoint linear regions~\citep{makhoul89firstlinearregions, montufar2014number}.

In our paper, we firstly show that these regions correspond to equivalence classes defined by the pre-images of either $\{0\}$ or the strictly positive interval $(0,\infty)$. Extending $\{0\}$ to $(-\infty, 0]$ provides a straightforward generalization to a broader class of activation functions that accommodate negative values, including Swish~\citep{ramachandran2018searching}, GELU~\citep{hendrycks2016gaussian}, and SwiGLU~\citep{shazeer2020glu}.
Secondly, we focus on characterizing properties of the space folding measure 
$\chi$, which do also hold in the general case. Thirdly, since computing  $\chi$ relies on sampling from different activation regions, we introduce a non-parametric sampling algorithm that exploits the structure of the aforementioned equivalence classes, thereby reducing redundant computations. 
Lastly, we leverage the fact that space folding values have been observed to \textit{increase} with network depth when the generalization error is low, but \textit{decrease} when the error increases~\citep{lewandowski25spacefolds}. We thus hypothesize the increased folding  contribute to the network's generalization capacity, and hint at a novel regularization strategy that  applies the folding measure at regular intervals (e.g., every $n$  training epochs) to induce stronger folding in the early stages and diminish its influence later in training. 
Our contributions are as follows.

\begin{itemize}
  \item We generalize  the space folding measure beyond the ReLU activation function. Our approach relies on the fact that the pre-image of the partition $\{ (-\infty, 0], (0, \infty)\}$ divides the domain into two connected sets, $f^{(-1)}((-\infty, 0])$ and $f^{(-1)}((0, \infty))$.
  \item We state and prove general properties of the folding measure,  such as (\textit{i}) its stability under traversing different activation regions (Proposition~\ref{prop:traversing_same_region}), (\textit{ii}) the sufficient and necessary, i.e., characterizing, condition for flatness (Proposition~\ref{prop:flatness}), (\textit{iii}) its sensitivity to the direction of the path (Remark~\ref{remark:asymmetricity}), (\textit{iv}) invariance of flatness to direction of path (Corollary~\ref{lemma:looped_path}).
  \item We propose a parameter-free sampling strategy in the Hamming activation space that limits steps within the same equivalence class to reduce redundancy, and we analyze its computational complexity. 
  \item We introduce a new regularization procedure for training neural networks by penalizing low space folding values.
\end{itemize}

The remainder of the paper is organized as follows. Sec.~\ref{sec:related_work} details related work; Sec.~\ref{sec:preliminaries} introduces necessary concepts and fixes notation for the rest of the paper; Sec.~\ref{sec:space_folding} recalls the definition of the space folding measure and then provides its detailed analysis paired with the introduction of the global folding measure; Sec.~\ref{sec:SamplingAlgorithm} introduces a  sampling technique from activation paths along a 1D path which relies on the Hamming distances between samples; Sec.~\ref{sec:folding_regularization} proposes a novel regularization scheme; Sec.~\ref{sec:discussion} summarizes our paper and outlines future research directions.

\section{Related Work}
\label{sec:related_work}

\paragraph{Folding.} The idea of  folding the (input) space has been investigated, among others, in computational geometry~\citep{demaine1998folding_and_cutting}. In context of neural networks, \citet{montufar2014number} in Section 2.4 argued that each hidden layer in a ReLU neural network acts as a folding operator, recursively collapsing input-space regions. 
In~\citet{phuong2020functional}, in the Appendix A.2 the authors defined the folds  by ReLU  networks, but left the exploration quite early on. \citet{lewandowski25spacefolds} proposed the first measure to  quantify the folding by ReLU neural networks. Our approach builds on the proposition therein, and is further motivated by the observation that folding gives rise to symmetries as discussed below. 

\paragraph{Symmetries.}
The modern study of symmetries (in physics) was initiated by~\citet{Noether1916}, who linked  them to \textit{conservation} laws: energy to time translation, momentum to space translation, and angular momentum to rotational symmetry. In the context of machine learning, researchers working with object recognition emphasised the importance
of learning representations that are \textit{invariant} to transformations, e.g.,~\citep{Krizhevsky2012ImageNetCompetition}. Somewhat implicitly, symmetries have been at the core of some of the most
successful deep neural network architectures, e.g., CNNs~\citep{fukushima1980neocognitron,lecun1989backpropagation} are equivariant
to translation invariance characteristic of image classification tasks, while GNNs~\citep{battaglia2018relational}  are equivariant to the full group of permutations
(see~\citet{Higgins2022SymmetryBasedRF} for a detailed overview). 
Our work analyzes symmetries (reflection groups) that arise by space folding and their impact on the generalization capacity of the model.

\paragraph{Linear Regions Sampling.} 
Analyzing neural network linear regions is challenging. Early work bounded their number as a measure of expressivity in ReLU MLPs~\citep{montufar2014number,raghu2017expressive,serra2018bounding,montufar2021sharp}, later extending to CNNs~\citep{xiong2020numberLR_CNN} and GNNs~\citep{chen2022lower}. Empirical studies indicate that linear regions are denser near training data~\citep{zhang2020empirical}, yet standard sampling methods (e.g., Monte Carlo or Sobol sequences~\citep{sobol1967distribution}) often miss small regions. \citet{goujon2024number} showed that along one-dimensional paths, nonlinearity points scale linearly with depth, width, and activation complexity, while \citet{gamba22arealllinearregions} proposed a direction-based method that requires costly minimal step calculations. In contrast, we introduce a sampling strategy in the Hamming activation space to efficiently identify linear regions along $\mathbf{d}=\mathbf{x}_2-\mathbf{x}_1$.

\section{Preliminaries}\label{sec:preliminaries}
We define a \emph{ReLU neural network} $\mathcal{N}:\mathcal{X}\rightarrow \mathcal{Y}$ with the total number of $N$ neurons as an alternating composition of   the ReLU function   $\sigma(x) := \max(x, 0)$   applied element-wise on the input $x$, and  affine functions with weights $W_k$ and biases $b_k$ at layer $k$. 
An input $x\in\mathcal{X}$ propagated through $\mathcal{N}$  generates non-negative activation values on each neuron.
A \textit{binarization} is  a mapping $\pi:\mathbb{R}^N \to \{0,1\}^N$ applied to a vector  $v=(v_1,\ldots,v_N)\in\mathbb{R}^N$, resulting in a binary vector by clipping strictly positive entries of $v$  to 1, and non-positive entries to 0, that is $\pi(v_i)=1$ if $v_i>0$, and $\pi(v_i)=0$ otherwise. In our case, the vector $v$ is the concatenation of all neurons of all hidden layers, called an \emph{activation pattern},  and it represents an element in a binary hypercube $\mathcal{H}^N:=\{0,1\}^N$ where the dimensionality is equal to the number $N$ of (hidden) neurons in network $\mathcal{N}$. A 
\emph{linear region} is an element of a partition covering the input domain where the network behaves as an affine function (Fig.~\ref{fig:straight_path_walks}, left).  The Hamming distance, $d_H(u,v):=\left|\{u_{i}\neq v_{i}\text{ for } i=1,\ldots,N\}\right|$, measures the difference between $u,v\in\mathcal{H}^N$, and for binary vectors is equivalent to the $L_1$ norm between those vectors. Lastly, as we will deal with paths of activation patterns, we denote the operation of joining those paths with the operator $\oplus:\mathcal{H}^{k\cdot N}\times\mathcal{H}^{(n-k+1)\cdot N}\to\mathcal{H}^{n\cdot N}$ such that $\{\pi_1,\ldots,\pi_{k}\} \oplus 
 \{\pi_k,\ldots,\pi_n\}=\{\pi_1,\ldots,\pi_k,\ldots,\pi_{n}\}$.
The operation $\oplus$ is defined for connected paths, where the last activation pattern of one path matches the first activation pattern of the other.

\begin{figure}
    \centering
    \includegraphics[width=0.7\linewidth]{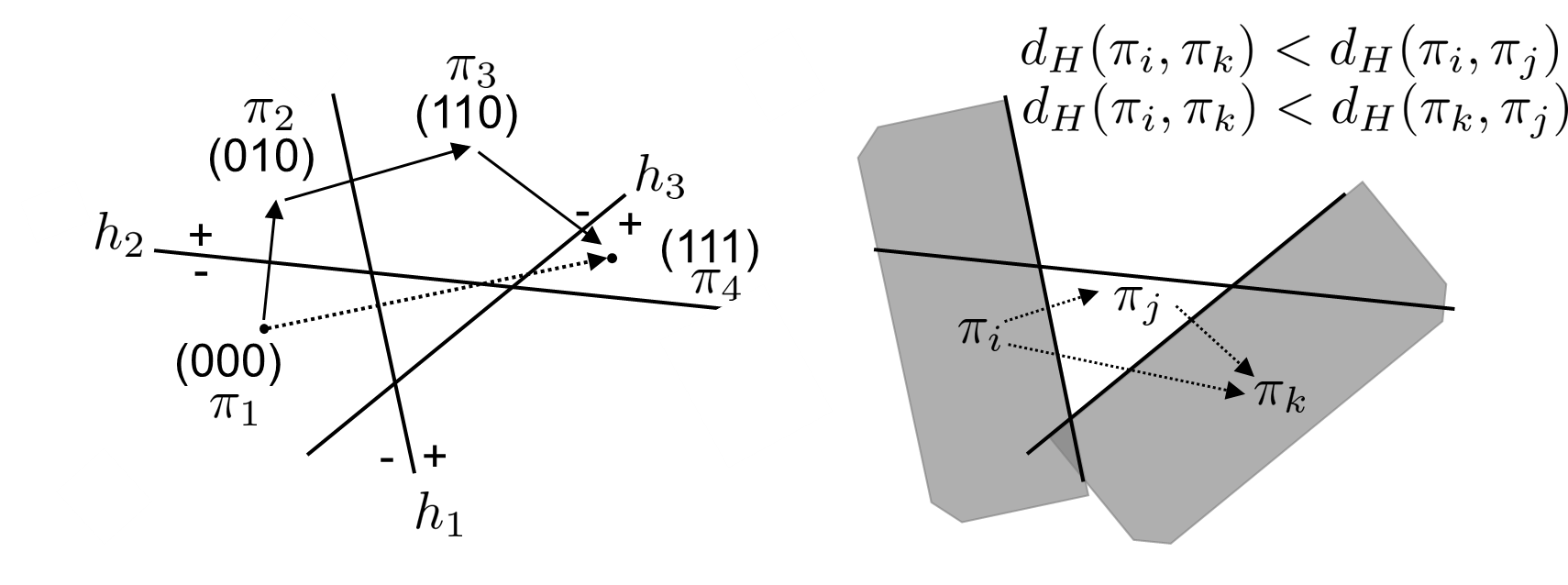}
    \caption{\textbf{Left:} Illustration of a walk on a straight path in the Euclidean input space and the Hamming activation space. The dotted line represent the shortest path  in the Euclidean space. The arrows represent \emph{a} shortest path in the Hamming distance between activation patterns $\pi_1$ and $\pi_4$ (in the Hamming space the  shortest path is not unique). \textbf{Right:} Symmetry in the activation space: gray regions are closer to each other in the Hamming distance than to the region $\pi_j$ that lies between them.}
    \label{fig:straight_path_walks}
\end{figure}

\section{Space Folding}
\label{sec:space_folding}

\subsection{Construction} Consider a straight line connecting two input points $\mathbf{x}_1,\mathbf{x}_2$ in the Euclidean input space. The intermediate points are realized by varying the parameter $t$ in a convex combination $(1- t) \mathbf{x}_1  +  t\mathbf{x}_2$. 
Due to practicality,~\citet{lewandowski25spacefolds}  spaced the parameter $t$ equidistantly on $[0,1]$, creating $n$ segments. Equal spacing, though easy and fast to implement, frequently results in suboptimal choice of the intermediate points (we address this issue in Sec.~\ref{sec:SamplingAlgorithm}). To obtain a walk through activation patterns, we map the straight line $[\mathbf{x}_1,\mathbf{x}_2]$ through a neural network $\mathcal{N}$ to a  \emph{path} $\Gamma:=\{\pi_1,\ldots,\pi_n\}\in\mathcal{H}^{n\cdot N}$ in the Hamming activation space, where the intermediate activation patterns belong to a binary hypercube, $\pi_i\in\mathcal{H}^N$ for all $i\in\{1,\ldots,n\}$ (see Fig.~\ref{fig:convex_dev}).
We consider a change in the Hamming distance with respect to the initial activation pattern  $\pi_1$  at each step $i$, $\Delta_i := d_H(\pi_{i+1}, \pi_1) - d_H(\pi_{i}, \pi_1)
$, and then look at the maximum of the cumulative change $\max_k \sum_{i=1}^k |\Delta_i|$ along the path $\Gamma$, 
\begin{equation}
\label{eq:cum_max}
    r_1(\Gamma) = \max_{i}\sum_{j=1}^{i}\Delta_j= \max_{i} d_H(\pi_i,\pi_1).
\end{equation}
We further keep track of  the total distance traveled on the hypercube when following the path, 
\begin{equation}
\label{eq:eff_path}
 r_2(\Gamma) = \sum_{i=1}^{n-1}d_H(\pi_i, \pi_{i+1}).
\end{equation}
For a measure of \emph{space flatness}, we consider the ratio $r_1(\Gamma)/r_2(\Gamma)$. Equivalently,  the \emph{space folding} measure equals
\begin{equation}
\label{eq:measureFold}
\chi(\Gamma):= 1-\max_{i} d_H(\pi_i,\pi_1) \big/
\sum_{i=1}^{n-1} d_H(\pi_{i}, \pi_{i+1}).
\end{equation}
The folding measure is lower and upper   bounded, $\chi\in[0,1]$; it equals 0 if there are no folds in the activation space, and it converges towards 1 for a path $\Gamma=\{\pi_1,\pi_2,\pi_1,\pi_2,\ldots\}$ looped between two activation regions such that $r_1(\Gamma)=d_H(\pi_1,\pi_2)=c\in\mathbb{R}^+$ and $r_2(\Gamma)\to\infty$. Although theoretically possible, this edge case example might be not
realizable in practice.

\begin{figure}
    \centering
    \includegraphics[scale=0.45]{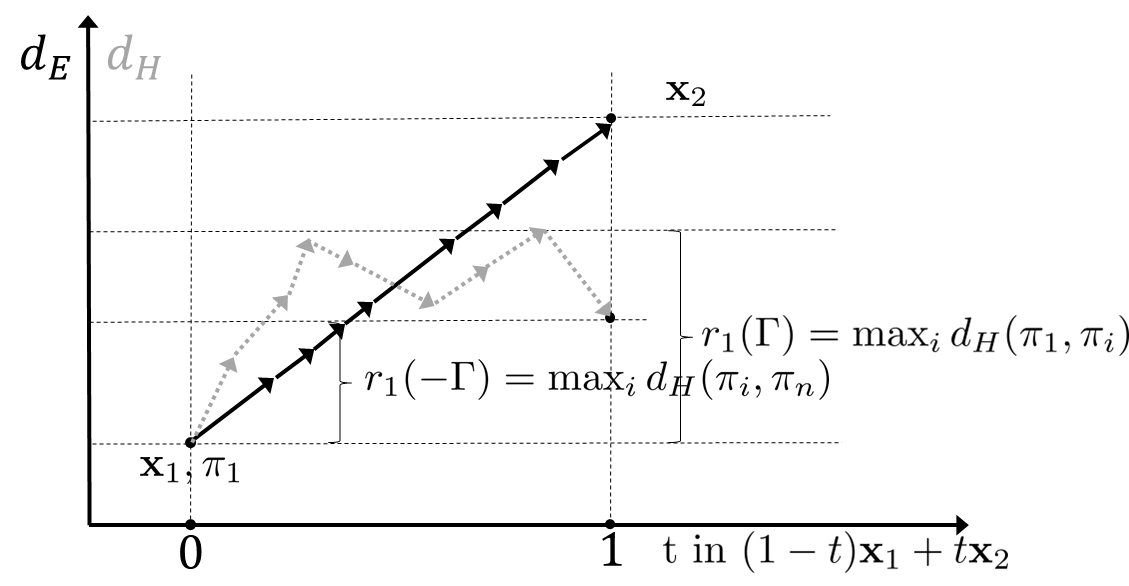}
    \caption{1D straight walk  from $\mathbf{x}_1$ to $\mathbf{x}_2$ in the Euclidean space (black full arrows) and the Hamming activation space (gray dotted arrows). Observe that in the Hamming activation space it might happen that $d_H(\pi_1,\pi_n)<\max_id_H(\pi_1,\pi_i)$, which indicates space folding. The steps are optimized to visit each equivalence class exactly once (not equidistant).}
    \label{fig:convex_dev}
\end{figure}

\subsection{Properties}

In this section, we prove several properties of the folding measure, starting with emphasizing the importance of an appropriate sampling strategy.
We show that taking multiple steps in the same activation region multiple times does not change the measure $\chi$. 
\begin{proposition}[\textbf{Stability}]
\label{prop:traversing_same_region}
    Multiple steps in the same activation region do not influence the space folding measure $\chi$.
\end{proposition}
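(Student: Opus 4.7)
The plan is to unpack the definition of $\chi$ in equation~\ref{eq:measureFold} and observe that multiple consecutive samples from the same activation region produce identical activation patterns, which then contribute trivially to both the numerator $r_1$ and the denominator $r_2$.

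First, I would note that by construction of the binarization map $\pi$, if two input points $\mathbf{x}, \mathbf{x}'$ lie in the same activation region (i.e., the same equivalence class induced by the pre-images of $\{(-\infty,0],(0,\infty)\}$ across all $N$ neurons), then $\pi(\mathcal{N}(\mathbf{x})) = \pi(\mathcal{N}(\mathbf{x}'))$. Hence any number of additional samples drawn inside an already-visited region yields a repeated activation pattern.

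Next, fix a path $\Gamma = \{\pi_1, \ldots, \pi_n\}$ and let $\Gamma'$ be obtained from $\Gamma$ by inserting, at some position $j$, one or more extra copies of $\pi_j$, so that $\Gamma' = \{\pi_1, \ldots, \pi_j, \pi_j, \ldots, \pi_j, \pi_{j+1}, \ldots, \pi_n\}$. For the denominator, each new consecutive pair contributes $d_H(\pi_j, \pi_j) = 0$, so $r_2(\Gamma') = r_2(\Gamma)$. For the numerator, the set of distances $\{d_H(\pi_i, \pi_1)\}_i$ is augmented only by additional copies of the value $d_H(\pi_j, \pi_1)$, which cannot change the maximum; hence $r_1(\Gamma') = r_1(\Gamma)$. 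Substituting into \eqref{eq:measureFold} gives $\chi(\Gamma') = \chi(\Gamma)$. The general case of multiple insertions in multiple regions follows by iterating this argument (or, equivalently, by viewing $\Gamma'$ as $\Gamma$ concatenated via $\oplus$ with trivial loops of the form $\{\pi_j, \pi_j, \ldots, \pi_j\}$).

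There is no real obstacle here; the only thing to be careful about is making the statement precise, since the Proposition says \emph{multiple steps in the same activation region} without specifying whether these steps are consecutive or whether the region might be revisited later. I would therefore phrase the argument in terms of inserting additional samples between two consecutive patterns $\pi_j, \pi_{j+1}$ that lie in the same region (so $\pi_j = \pi_{j+1}$), which is the situation that arises in practice when an equidistant spacing of $t\in[0,1]$ oversamples a single linear region. The key conceptual point worth emphasizing in the write-up is that this stability is precisely what motivates the parameter-free sampling strategy of Sec.~\ref{sec:SamplingAlgorithm}: since redundant in-region samples are invisible to $\chi$, one loses nothing by pruning them, and in fact gains computational efficiency.
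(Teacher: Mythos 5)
Your proposal is correct and follows essentially the same argument as the paper's proof: repeated patterns within a region contribute $d_H(\pi_j,\pi_j)=0$ to $r_2$ and only duplicate values in the maximum defining $r_1$, leaving $\chi$ unchanged. Your write-up is in fact slightly more explicit than the paper's (which simply asserts the equality of the max and the sum over the extended index range), but the underlying decomposition is identical.
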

\begin{proof}
     Consider  a path  $\{\pi_1,\ldots,\pi_i,\ldots,\pi_{i+l}\ldots, \pi_n\}$, where $\pi_i=\pi_{i+1}=\ldots=\pi_{i+l}$ and all other activation patterns are distinct. Observe  that 
$$\max_{j\in\{1,\ldots,i\}}d_H(\pi_1,\pi_j)=\max_{j\in\{1,\ldots,i+l\}}d_H(\pi_1,\pi_j)$$
and $$\sum_{j=1}^{i-1} d_H(\pi_j,\pi_{j+1})=\sum_{j=1}^{i+l-1}d_H(\pi_j,\pi_{j+1}),$$ 
thus traversing the same activation pattern more that once does not change the folding measure. 
\end{proof}

Proposition~\ref{prop:traversing_same_region} further highlights the importance of the sampling strategy that visits every  activation pattern between samples. 
In Sec.~\ref{sec:SamplingAlgorithm}, we propose  a sampling algorithm that relies on the Hamming distances between samples  in the Hamming activation space.
We now show a necessary condition for space flatness, relevant for the upcoming analysis of the direction of folding.
\begin{proposition}[\textbf{Flatness}]
\label{prop:flatness}
$\chi(\Gamma)=0$ if and only if $d_H(\pi_1,\pi_i)$ is non-decreasing for $i=1,\ldots, n$ along the path $\Gamma$.
\end{proposition}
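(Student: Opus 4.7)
Writing $D_i := d_H(\pi_1, \pi_i)$, the condition $\chi(\Gamma) = 0$ unfolds to $r_1(\Gamma) = r_2(\Gamma)$, i.e.\
\[
\max_i D_i \;=\; \sum_{j=1}^{n-1} d_H(\pi_j, \pi_{j+1}).
\]
The central inequality I would build the proof around is the telescoped reverse triangle inequality
\[
r_2(\Gamma) \;\geq\; \sum_{j=1}^{n-1}\bigl|D_{j+1} - D_j\bigr| \;\geq\; \max_i D_i \;=\; r_1(\Gamma),
\]
where the second step uses $D_1 = 0$ together with the elementary fact that the total variation of a real sequence starting at $0$ dominates its maximum.

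For the $(\Rightarrow)$ direction I would assume $r_1 = r_2$ and squeeze both inequalities above. The equality in the second step is the restrictive one: picking any $i^\star$ with $D_{i^\star} = \max_i D_i$ and splitting the total variation as $\sum_{j<i^\star} + \sum_{j\geq i^\star}$, the prefix part is $\geq D_{i^\star}$ (telescoping) and the suffix part is $\geq 0$, so equality forces monotonicity on $[1, i^\star]$ (necessarily non-decreasing, since it goes from $0$ up to the maximum) and constancy on $[i^\star, n]$. Both pieces agree at $i^\star$ and together give that $D_i$ is non-decreasing on all of $\{1, \ldots, n\}$.

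For the $(\Leftarrow)$ direction, assuming $D_i$ non-decreasing immediately gives $\sum_j|D_{j+1}-D_j| = D_n - D_1 = \max_i D_i$, so the second step of the chain is already tight. What remains is to upgrade the first step to equality, i.e.\ to argue $d_H(\pi_j, \pi_{j+1}) = |D_{j+1} - D_j|$ at every $j$. This is the step I expect to be the main obstacle, because in full generality a single sample can jump across several linear regions and make $d_H(\pi_j,\pi_{j+1})$ strictly larger than $|D_{j+1}-D_j|$. I would close it by first collapsing repeated patterns using Proposition~\ref{prop:traversing_same_region} (which preserves $\chi$) and then invoking the sampling convention made precise in Section~\ref{sec:SamplingAlgorithm}, under which consecutive distinct patterns along a straight input-space segment differ by a single coordinate flip; the non-decreasing hypothesis then forces each such flip to contribute exactly $1 = D_{j+1} - D_j$, yielding $r_2 = r_1$ and therefore $\chi(\Gamma) = 0$.
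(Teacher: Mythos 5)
Your argument is correct and takes a genuinely different---and more complete---route than the paper's. The paper proves only the forward implication, by contradiction: assuming $d_H(\pi_1,\pi_{i^*})<d_H(\pi_1,\pi_{i^*-1})$ for some $i^*$, it derives a violation of $r_1=r_2$ (and even there the displayed sums are written over $d_H(\pi_1,\pi_i)$ rather than over the consecutive distances $d_H(\pi_i,\pi_{i+1})$ that actually constitute $r_2$, so the contradiction is only sketched); the converse direction is not addressed at all. Your sandwich
\[
r_2(\Gamma)\;\ge\;\sum_{j=1}^{n-1}\bigl|D_{j+1}-D_j\bigr|\;\ge\;\max_i D_i\;=\;r_1(\Gamma)
\]
handles the forward direction cleanly via the equality analysis, and, more importantly, correctly isolates where the converse can fail: for an arbitrary path in $\mathcal{H}^N$, non-decreasing $D_i$ does \emph{not} imply $\chi(\Gamma)=0$. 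A concrete witness is $\pi_1=(0,0)$, $\pi_2=(1,0)$, $\pi_3=(0,1)$: here $D=(0,1,1)$ is non-decreasing, yet $r_1=1$ and $r_2=1+2=3$, so $\chi=2/3$. Your patch---collapsing repeated patterns via Proposition~\ref{prop:traversing_same_region} and then invoking the single-bit-flip sampling convention of Section~\ref{sec:SamplingAlgorithm}, under which $D_{j+1}-D_j=\pm1$ and monotonicity forces $d_H(\pi_j,\pi_{j+1})=D_{j+1}-D_j=1$---is the right way to make the ``if'' direction true. Be aware, however, that the paper itself concedes that adjacent activation regions may sit at Hamming distance greater than $1$, so this convention must be stated as an explicit hypothesis; as literally stated the proposition holds only in the direction the paper proves, and your write-up is the one that makes this precise.
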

\begin{proof}
We show that  the space flatness implies that $d_H(\pi_1,\pi_i)$ is non-decreasing along the path $\Gamma$. Note that $\chi(\Gamma)=0\Rightarrow \max_jd_H(\pi_1,\pi_j)=\sum_{i=1}^{j-1}d_H(\pi_i,\pi_{i+1})$ for every index $j\in\{1,\ldots,n-1\}$. Let us argue through contradiction: Suppose that $d_H(\pi_1,\pi_i)$ decreases along the path $\Gamma$ for some index $i^*$, i.e., $d_H(\pi_1,\pi_{i^*})<d_H(\pi_1,\pi_{i^*-1})$. This contradicts flatness, since $\sum_{i=1}^{i^*}d_H(\pi_1,\pi_i)>\sum_{i=1}^{i^*-1}d_H(\pi_1,\pi_i)$ while $\max_{j\le i^*-1}d_H(\pi_1,\pi_j)=\max_{j\le i^*}d_H(\pi_1,\pi_j)$, indicating folding of the space and thus finishing the proof.
\end{proof}
Proposition~\ref{prop:flatness} implies that the folding  occurs if $r_1$ (Eq.~(\ref{eq:cum_max})) decreases at least once along the path.
In the next step, we show that the folding measure is neither sub- nor super-additive, i.e., it neither holds that $\chi(\Gamma_1)\oplus\chi(\Gamma_2)\le \chi(\Gamma_1\oplus\Gamma_2)$ nor $\chi(\Gamma_1)\oplus\chi(\Gamma_2)\ge \chi(\Gamma_1\oplus\Gamma_2)$ for  connected paths $\Gamma_1, \Gamma_2$. Indeed, consider $\Gamma_1=\{\pi_1,\pi_2\}$ and $\Gamma_2=\{\pi_2,\pi_3, \pi_4\}$. A counter example for the sub-additivity is a path traversing the activation regions defined as
\begin{equation}
\label{counter_subadditivity}
    \pi_1=\begin{pmatrix}
    0\\0\\0
\end{pmatrix},
\pi_2=\begin{pmatrix}
    0\\0\\1
\end{pmatrix},
\pi_3=\begin{pmatrix}
    1\\1\\1
\end{pmatrix},
\pi_4=\begin{pmatrix}
    1\\0\\1
\end{pmatrix},
\end{equation}
where $\chi(\Gamma_1\oplus\Gamma_2)=\frac14$ and $\chi(\Gamma_1)+\chi(\Gamma_2)=0+\frac13=\frac13$, thus $\chi(\Gamma_1)+\chi(\Gamma_2)\ge\chi(\Gamma_1\oplus\Gamma_2)$ (for connected paths $\Gamma_1$ and $\Gamma_2$). To see that we can also construct  a counter example for super-additivity, consider paths as previously with the activation patterns defined as 
\begin{equation}
\label{eq:counter_superadditivity}
    \pi_1=\begin{pmatrix}
    0\\0\\0
\end{pmatrix},
\pi_2=\begin{pmatrix}
    1\\1\\1
\end{pmatrix},
\pi_3=\begin{pmatrix}
    0\\0\\1
\end{pmatrix},
\pi_4=\begin{pmatrix}
    1\\0\\0
\end{pmatrix},
\end{equation}
Then, $\chi(\Gamma_1\oplus\Gamma_2)=\frac47$ while 
$\chi(\Gamma_1)+\chi(\Gamma_2)=0+\frac12=\frac12$, thus $\chi(\Gamma_1)+\chi(\Gamma_2)\le\chi(\Gamma_1\oplus\Gamma_2)$.

While nor super- nor sub-additivity hold for every path $\Gamma$, in our experiments we have  only  observed  sub-additivity of the folding measure. The counterexample for super-additivity (Eq.~(\ref{eq:counter_superadditivity})), seems to be a rare occurrence in trained networks, though can be observed in specially constructed examples (see CantorNet by~\citet{lewandowski2024cantornet}).
The general lack of super- or sub-additivity, but empirical sub-additivity motivates us to introduce the  deviation from additivity for two paths $\Gamma_1$ and $\Gamma_2$ as $\Delta:\mathcal{H}^{n_1\cdot N}\times \mathcal{H}^{n_2\cdot N}\to[0,1]$, where 
\begin{equation}
\label{eq:interaction_coeff}
\mathcal{I}(\Gamma_1,\Gamma_2) := |\chi(\Gamma_1\oplus \Gamma_2) - \chi(\Gamma_1) - \chi(\Gamma_2)|.
\end{equation}

\subsection{On the Directnedness of Folding} 
\label{sec:directnedness}

So far, we have elaborated on the properties of the folding measure $\chi$ given by Eq.~(\ref{eq:measureFold}). We note that a path $\Gamma=\{\pi_1,\ldots,\pi_n\}$ along which the measure is computed is \emph{directed}, i.e.,  the measure $\chi$ computed along the reversed path, $-\Gamma:=\{\pi_n, \ldots, \pi_1\}$, may reach different folding values than $\chi(\Gamma)$, which we phrase as Remark~\ref{remark:asymmetricity}.

\begin{remark}[\textbf{Asymmetry}]
\label{remark:asymmetricity}
Consider $\Gamma=\{\pi_1,\pi_2,\pi_3\}$, where $\pi_1=(000), \pi_2=(111), \pi_3=(001)$, and its reverse $-\Gamma=\{\pi_3,\pi_2,\pi_1\}$. Then, $r_2(\Gamma)=r_2(-\Gamma)$ but $r_1(\Gamma) = 3$ and  $r_1(-\Gamma) = 2$, thus  $\chi(\Gamma)\neq\chi(-\Gamma).$
\end{remark}
It can be shown that, while folding is direction-sensitive, flatness is direction-invariant, expressed as Corollary~\ref{lemma:looped_path}.

\begin{corollary}[\textbf{Flatness Invariance}]
    \label{lemma:looped_path}
      $\chi(\Gamma)=0$ if and only if $\chi(-\Gamma)=0$ for  a path $\Gamma=\{\pi_1,\ldots, \pi_n\}$.
\end{corollary}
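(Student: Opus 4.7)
The plan is to exploit the symmetry of the Hamming distance in order to reduce the claim to a one-directional implication, and then to show that $\chi(\Gamma)=0$ forces $\Gamma$ to be a \emph{geodesic} in the Hamming hypercube, a property that is manifestly invariant under reversal. First I would record the trivial identity $r_2(\Gamma)=r_2(-\Gamma)$, since $r_2$ is a symmetric sum over consecutive pairs. Together with $-(-\Gamma)=\Gamma$, this lets me focus on the single direction $\chi(\Gamma)=0 \Rightarrow \chi(-\Gamma)=0$, and the task reduces to showing $r_1(-\Gamma) = r_2(-\Gamma)$ whenever $r_1(\Gamma) = r_2(\Gamma)$.

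Assume now that $\chi(\Gamma)=0$. By the necessity direction of Proposition~\ref{prop:flatness}, the sequence $i \mapsto d_H(\pi_1,\pi_i)$ is non-decreasing, so in particular $r_1(\Gamma) = d_H(\pi_1,\pi_n)$. Combined with $r_1(\Gamma)=r_2(\Gamma)$, this gives
\begin{equation*}
 d_H(\pi_1,\pi_n) \;=\; \sum_{j=1}^{n-1} d_H(\pi_j,\pi_{j+1}),
\end{equation*}
i.e.\ the telescoping triangle inequality is saturated along the entire path. Splitting the sum at an arbitrary index $i$ and applying the triangle inequality to each half,
$d_H(\pi_1,\pi_i) \le \sum_{j=1}^{i-1} d_H(\pi_j,\pi_{j+1})$ and $d_H(\pi_i,\pi_n) \le \sum_{j=i}^{n-1} d_H(\pi_j,\pi_{j+1})$, any strict slack in either estimate would contradict the global tightness just obtained. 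Hence both inequalities are equalities, and adding them gives the pointwise geodesic identity
$d_H(\pi_1,\pi_i) + d_H(\pi_i,\pi_n) = d_H(\pi_1,\pi_n)$ for every $i$.

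Rewriting this as $d_H(\pi_n,\pi_i) = d_H(\pi_1,\pi_n) - d_H(\pi_1,\pi_i)$ shows that the distance from $\pi_n$ is non-increasing in $i$; therefore its maximum along $-\Gamma$ is realised at $i=1$ and equals $d_H(\pi_1,\pi_n)$. Putting it together, $r_1(-\Gamma) = d_H(\pi_1,\pi_n) = r_2(-\Gamma)$, which yields $\chi(-\Gamma)=0$. The converse then follows by applying the same argument to $-\Gamma$, whose reversal is $\Gamma$.

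The main obstacle I anticipate is the second step: lifting the single endpoint-to-endpoint tightness to the pointwise geodesic identity. Everything else (the symmetry of $r_2$, the monotone rewriting of $d_H(\pi_n,\pi_i)$, and the final identification of $r_1(-\Gamma)$) is essentially bookkeeping once the squeeze between the two triangle inequalities is in place.
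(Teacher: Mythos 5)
Your proof is correct, and it takes a genuinely more self-contained route than the paper's. The paper's argument is a two-line sketch: reduce to one direction by re-indexing, invoke Proposition~\ref{prop:flatness} to get that $i\mapsto d_H(\pi_1,\pi_i)$ is non-decreasing, assert that this ``also implies'' that the distances from $\pi_n$ are non-decreasing along the reversed path, and then conclude flatness of $-\Gamma$ via the converse direction of Proposition~\ref{prop:flatness}. You instead extract the full strength of $\chi(\Gamma)=0$: the maximum is attained at the endpoint, so $d_H(\pi_1,\pi_n)=\sum_j d_H(\pi_j,\pi_{j+1})$, the telescoped triangle inequality is saturated, and the pointwise geodesic identity $d_H(\pi_1,\pi_i)+d_H(\pi_i,\pi_n)=d_H(\pi_1,\pi_n)$ follows; from there $r_1(-\Gamma)=d_H(\pi_1,\pi_n)=r_2(-\Gamma)$ is a direct computation. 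This buys you something real, because the paper's intermediate steps do not hold for merely monotone paths: with $\pi_1=(0,0,0)$, $\pi_2=(1,0,0)$, $\pi_3=(0,1,1)$ the distances from $\pi_1$ are $0,1,2$ (non-decreasing) while those from $\pi_3$ are $2,3,0$, so monotonicity does not reverse; and with $\pi_1=(0,0,0)$, $\pi_2=(1,1,0)$, $\pi_3=(0,1,1)$ the distances from $\pi_1$ are non-decreasing yet $\chi=1/2$, so the converse direction of Proposition~\ref{prop:flatness} used at the end of the paper's proof is not available as stated. Your geodesic-saturation step is exactly the extra hypothesis that makes both implications go through, so your argument is not just an alternative but a repair of the published one. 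The only cosmetic remark is that you do not really need Proposition~\ref{prop:flatness} even for the first step: $\chi(\Gamma)=0$ directly forces the increments past the argmax to vanish, giving $r_1(\Gamma)=d_H(\pi_1,\pi_n)$ without appealing to monotonicity.
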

\begin{proof}
Observe that it is sufficient to prove Corollary~\ref{lemma:looped_path} only in way direction as we can re-index the path $\Gamma$ to obtain its reverse. 
We use  Proposition~\ref{prop:flatness}: if $\chi(\Gamma)=0$, then $d_H(\pi_1,\pi_i)$ is non-decreasing, what also implies that along the  reversed path $\Gamma$ the Hamming distance $d_H(\pi_n,\pi_i)$ is non-decreasing, indicating that $\chi(-\Gamma)=0$.
\end{proof}

\subsection{Global Space Folding Measure}
We now adapt a new, global measure of folding. 
Consider a classification problem with classes $C=\{1,\ldots,L\}$. Suppose that we have computed the folding measure for every pair of samples between classes $C_i$ and $C_j$ by pairwise computation and taking the median of non-zero values $\chi_+(C_i,C_j)$. We propose  the average of inter-class\footnote{
We used the Mann-Whitney test~\citep{whitney1947mannwhitneytest} to compare intra- and inter-class median folding values in networks with low generalization error. A statistically significant difference (per thresholds in~\citet{Cohen1992powerprimer}) showed that inter-class folding values are higher, suggesting that the network folds space within each digit class for more efficient representation, thus justifying their separate analysis.} folding values, i.e.,
\begin{equation}
    \label{eq:global_folding}
    \Phi_\mathcal{N}:=\frac 1{(L-1)L}\sum_{C_i\neq C_j}\chi_+(C_i,C_j) \in[0,1]
\end{equation}
as the global folding measure.  We posit that the global folding as characterized by Eq.~(\ref{eq:global_folding}) is the same if computed between every pair of samples regardless of their class assignment. Remark that, for a dataset with as little as  $10^4$ data points (e.g., MNIST test set), computing folding values pairwise would require $10^4!$  computations, which is computationally prohibitive even for small networks.  As it has been observed in ~\citep{lewandowski25spacefolds} that  the folding values in smaller networks (totaling 60 hidden neurons) and  in larger architectures (totaling 600 neurons) remain approximately the same for a fixed number of layers (if the networks are trained to a low generalization error), we posit that, for classification problems, the global folding   is a \emph{feature} of the neural architecture $\mathcal{N}$ for which it has been computed, i.e., 
\begin{equation}
\label{eq:sum_folding_and_reverse_folding}
\Phi_{\mathcal{N}}
\xrightarrow[]{\substack{\text{no. samples}\to\infty \\ \text{no. neurons}\to\infty}}
\text{const}(\mathcal{N}).
\end{equation}
The constant values of folding with increasing size of the network has yet another consequence. It means that, although there is an increasing number of linear regions as indicated by the works which provide bounds on this number, e.g.,~\citep{montufar2014number,raghu2017expressive,serra2018bounding,HaninR19reluhavefewactivation}, the networks fold the space in a very similar manner if the generalization error is low, which we exploit in Section~\ref{sec:folding_regularization}. 

\subsection{Beyond ReLU}
\label{sec:beyond_relu}
Thus far we have proven several  properties of the folding measure $\chi$ and provided additional interpretations. In this section, we interpret a walk through activation regions in ReLU-based MLP as a walk traversing distinct equivalence classes, and then show how this extends to \emph{any} activation function. This makes our study directly applicable to vast range of activation functions, such as Swish~\citep{ramachandran2018searching}, GELU~\citep{hendrycks2016gaussian} or SwiGLU~\citep{shazeer2020glu}. We start by defining the input equivalence relationship for ReLU neural networks.
\begin{definition}
\label{def:equivalence_classes}
 We define the equivalence relation between two inputs $\mathbf{x}_1, \mathbf{x}_2$ with respect to a neural network $\mathcal{N}$ as
$$
\mathbf{x}_1 \sim_\mathcal{N} \mathbf{x}_2
\Longleftrightarrow d_H\left(\pi(\mathbf{x}_1), \pi(\mathbf{x}_2)\right)=0
$$
\end{definition}
For  ReLU neural networks the equivalence class 
$
[\mathbf{x}_1]_\mathcal{N}:=\left\{\mathbf{z} \in \mathbb{R}^m \mid \mathbf{z} \sim_\mathcal{N} \mathbf{x}_1\right\}
$
corresponds to a linear region  which contains point $\mathbf{x}_1$.
We now show that the relation  in Def.~\ref{def:equivalence_classes} is that of equivalence.
Indeed, \textit{reflexivity} holds as  $\mathbf{x}\sim \mathbf{x}\Rightarrow \pi(\mathbf{x})=\pi(\mathbf{x})\Rightarrow d_H(\pi(\mathbf{x}),\pi(\mathbf{x}))=0$, and vice-versa, $d_H(\pi(\mathbf{z}),\pi(\mathbf{x}))=0$ holds for all $\mathbf{z}$ such that $\mathbf{z}\in[\mathbf{x}]_\mathcal{N}$, which also contains $\mathbf{x}$. \textit{Symmetry} is straightforward to check, and \textit{transitivity} holds as $\mathbf{x}\sim\mathbf{y}$ and $\mathbf{y}\sim\mathbf{z}$ implies that $d_H(\pi(\mathbf{x}),\pi(\mathbf{y}))=0$ and $d_H(\pi(\mathbf{y}),\pi(\mathbf{z}))=0$ thus also $d_H(\pi(\mathbf{x}),\pi(\mathbf{z}))=0$, and inversely, 0 Hamming distances between $\pi(\mathbf{x})$ and $\pi(\mathbf{y})$ as well as $\pi(\mathbf{y})$ and $\pi(\mathbf{z})$ imply that $\mathbf{z}\in[\mathbf{x}]_\mathcal{N}$.
In the following, we will extend the above results to a richer class of activation functions.  While it is possible,  we lose the geometrical interpretation of equivalence classes as  ``linear regions''. Henceforth for the computation of the folding measure $\Phi_\mathcal{N}$, we consider a walk through input equivalence classes, not linear regions, thus  extending the applicability of the space folding measure to much wider class of neural architectures. In order to obtain binary activation vectors, we threshold the values on intermediate layers (after applying the activation function) in a similar way as with the ReLU function, i.e., for a vector of activation values $\mathbf{a}\in\mathbb{R}^n$ we create an  \textit{activation pattern}  by only considering strictly positive vs. non-positive activation values, and denoting them with $1$ and $0$, respectively. For monotonic activation functions, we obtain a disjoint partition of the input space, thus the equivalence relationship as defined in Def.~\ref{def:equivalence_classes} holds.

\section{Sampling Strategy}
\label{sec:SamplingAlgorithm}

In Proposition~\ref{prop:traversing_same_region}, we have shown the importance of an appropriate sampling for numerically computing the folding measure efficiently (see also Fig.~\ref{fig:sampling_strategies}). In this section, we introduce a 1D sampling strategy in the Hamming activation space, presented in Algorithm~\ref{alg:sampling_strategy}. Our algorithm is based on the Hamming distance between activation patterns along the path. Our method is parameter-free, straightforward to implement and  intuitive to understand. It is based on the following intuition. Starting from \(\mathbf{x}_1\), we move incrementally towards \(\mathbf{x}_2\), generating the next point \(\mathbf{x}_\text{next}\). Initially, we take a step of length \(\Delta_{\text{init}}\), compute the activation pattern under the network \(\mathcal{N}\), and measure the Hamming distance \(d_H(\pi_1, \pi_\text{next})\). If \(d_H(\pi_1, \pi_\text{next}) = 0\), we proceed without storing \(\pi_\text{next}\); if \(d_H(\pi_1, \pi_\text{next}) = 1\), we store \(\pi_\text{next}\). Lastly, if \(d_H(\pi_1, \pi_\text{next}) > 1\), we iteratively reduce \(\Delta\) until either \(d_H(\pi_1, \pi_\text{next}) = 1\) or \(\Delta\) reaches \(\Delta_\text{min}\). In the latter case, we accept \(\pi_\text{next}\) and continue moving toward \(\mathbf{x}_2\).  We note that adjacent activation regions may have the Hamming distance exceeding 1 -- the algorithm will work unaffected, but this issue highlights the importance of the choice of the minimal step size $\Delta_\text{min}$. 

\begin{figure}[htb]
    \centering
    \includegraphics[width=1\linewidth]{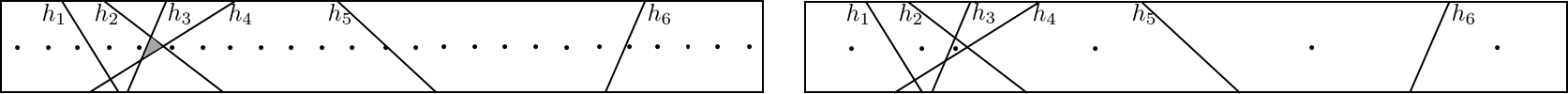}
    \caption{2D slice of the ReLU tessellation defined by hyperplanes $h_1,\ldots,h_6$ highlights the need for optimal sampling.  \textbf{Left:} Equally spaced points may revisit regions and miss small ones (gray). \textbf{Right:}  The optimized path visits each region exactly once.}
    \label{fig:sampling_strategies}
\end{figure}

\begin{algorithm}
    \caption{Sampling Strategy}
    \label{alg:sampling_strategy}
\begin{algorithmic}[1]
\STATE \textbf{input} points  $\mathbf{x}_1,\mathbf{x}_2$;  network $\mathcal{N}$; initial step size $\Delta_\text{init}$; minimal step size $\Delta_\text{min}$ 

\STATE \textbf{output} 
 $\mathcal{P}$:   activation patterns  between $\mathbf{x}_1$ and $\mathbf{x}_2$

\STATE $\Delta t \gets \Delta_{\mathrm{init}}$   
\STATE $\pi_{\text{prev}} \gets \mathrm{GetActivationPattern}(\mathbf{x}_1)$
\STATE $\mathcal{P} \gets \{\, \pi_{\text{prev}} \}$

\WHILE{  $t < 1$}
  \STATE $t_{\mathrm{next}} \gets \min\,(t + \Delta t,\, 1)$ \;
  \STATE $x_{\mathrm{next}} \gets \mathbf{x}_1 + t_{\mathrm{next}} \,(\mathbf{x}_2 - \mathbf{x}_1)$ \;
  \STATE $\pi_{\mathrm{next}} \gets \mathrm{GetActivationPattern}(\mathbf{x}_{\mathrm{next}})$ \;
\IF{$d_H(\pi_{\text{prev}},\pi_{\text{next}})=1$}
\STATE $t,\pi_{\mathrm{next}}, \mathcal{P} \gets\mathrm{UpdateParams}(t, \pi_{\mathrm{prev}}, \mathcal{P})$
\ELSIF{$d_H(\pi_{\text{prev}},\pi_{\text{next}})>1$}
\IF{$\Delta t \le \Delta_{\min}$} 
\STATE $t,\pi_{\mathrm{next}}, \mathcal{P} \gets\mathrm{UpdateParams}(t, \pi_{\mathrm{prev}}, \mathcal{P})$
\ELSE
\STATE $\Delta t \leftarrow \Delta t / 2$ \;
\ENDIF
\ELSE
\STATE $t\gets t_\text{next}$
\ENDIF
\ENDWHILE
\end{algorithmic}
\end{algorithm}

\paragraph{Complexity Analysis.}

Let $M$ be the total number of steps actually taken (including refined steps), $O(\texttt{C})$ be the cost of running the network in the inference mode. Hence, the total computational cost is $O(M\cdot\texttt{C})$. In the worst case, if many boundaries are crossed in very small intervals, the step size  keeps halving, leading to a potentially large $M$.  Each halving leads to a geometric progression, resulting in  $\log \left(\Delta_{\text {init }} / \Delta_{\min }\right)$ refinement steps in some regions. In a typical scenario, $M$ might be on the order of a few hundred. In a pathological scenario, it can grow larger but is still upper-bounded by repeated halving.

\section{Folding as a Regularization Strategy}
\label{sec:folding_regularization}
Thus far, \citet{lewandowski25spacefolds} observed that increased space folding values correlate positively with the generalization capabilities of ReLU-based MLP, motivating  its use  as a regularization during the training process.   One approach is to modify the loss function as follows:
$$
Loss\gets Loss + \lambda \frac1{(\Phi_\mathcal{N}+1)^{2}}.
$$
This formulation takes advantage of the fact that  $\Phi_\mathcal{N}\in[0,1]$: During the early stages of the training when $\Phi_\mathcal{N}$ is low, the regularization effect is strong; as the network learns and $\Phi_\mathcal{N}$ increases, the influence diminishes. To further encourage early folding, we  use $(\Phi_\mathcal{N}+1)^2$. 
To  incorporate  $\Phi_\mathcal{N}$ into a gradient-based learning algorithm, we replace the non-differentiable maximum function in $r_1$ with a smooth approximation using the log-sum-exp function. For a temperature parameter $\beta>0$, define
$$
\widetilde{r_1}(\Gamma)=\frac{1}{\beta} \log \left(\sum_{i=1}^n \exp \left(\beta d_H\left(\pi_1, \pi_i\right)\right)\right)
$$
As $\beta$ increases, $\widetilde{r_1}(\Gamma)$ approaches the true maximum; for finite $\beta$, the function is smooth and differentiable. The resulting folding value  can be  incorporated into the loss function and optimized using backpropagation. The regularization procedure generalizes to any activation function through equivalence classes (not limited to ReLU-based MLP, see  Sec.~\ref{sec:beyond_relu}).

\section{Final Remarks}
\label{sec:discussion}

\paragraph{Future Work.}
We outline several  directions to pursue in relation to the folding measure: (\textit{i}) The proposed  regularization scheme in Section~\ref{sec:folding_regularization} remains untested -- in its current form, it requires multiple stops during training (every $n$ epochs), suggesting opportunities for further optimization; (\textit{ii}) While we have extended the space folding measure beyond the ReLU activation function, its behavior in other neural architectures necessitates further investigation; (\textit{iii}) Lastly, although we have defined the interaction effect (Eq.~(\ref{eq:interaction_coeff})), we have not used it in empirical evaluations. We intend to do so in context the of adversarial attacks.

\paragraph{Summary.}
Our study deepens the mathematical understanding of the space folding measure and lays the groundwork for further experimental work. We have extended the applicability of the space folding measure to any activation function,  highlighted key theoretical properties, and suggested its potential as a regularization technique.

\paragraph{Acknowledgments}
This research was carried out under the Austrian COMET program (project S3AI with FFG no. 872172), which is funded by the Austrian ministries BMK, BMDW, and the province of Upper Austria. 

\bibliography{references}
\bibliographystyle{iclr2025_conference}

\end{document}